\documentclass{ecai}

\usepackage{latexsym}

\usepackage{authblk}

\usepackage{times}  % DO NOT CHANGE THIS
\usepackage{helvet} % DO NOT CHANGE THIS
\usepackage{courier}  % DO NOT CHANGE THIS
\usepackage[hyphens]{url}  % DO NOT CHANGE THIS
\usepackage{graphicx} % DO NOT CHANGE THIS
\urlstyle{rm} % DO NOT CHANGE THIS
  % DO NOT CHANGE THIS
\usepackage{graphicx}  % DO NOT CHANGE THIS

%\nocopyright
\usepackage{csquotes}

%  \pdfinfo{
% /Title ()
% /Author ()
% }
%\setcounter{secnumdepth}{0}
\usepackage{times}
\usepackage{graphicx}     
\usepackage{amsmath,amsthm}
\usepackage{txfonts}
\usepackage[ruled,vlined]{algorithm2e}

\usepackage{bbm}

% \DeclareMathAlphabet{\pazocal}{OMS}{zplm}{m}{n}

\makeatletter
\newcommand{\newreptheorem}[2]{%
\newenvironment{rep#1}[1]{%
 \def\rep@title{#2 \ref{##1}}%
 \begin{rep@theorem}}%
 {\end{rep@theorem}}}
\makeatother

\newcommand{\pazocal}{\cal}

% %\pgfrealjobname{example} % name of this file
%

%
%
% \setlength{\parskip}{\smallskipamount}

%  THEOREM-LIKE ENVIRONMENTS
\newtheorem{THEOREM}{Theorem}
\renewenvironment{theorem}{\begin{THEOREM} }%
                        {\end{THEOREM}}

\newtheorem{LEMMA}[THEOREM]{Lemma}
\newenvironment{lemma}{\begin{LEMMA} \hspace{-.85em} {\bf :} }%
                      {\end{LEMMA}}
\newtheorem{COROLLARY}[THEOREM]{Corollary}
                          {\end{COROLLARY}}
\newtheorem{PROPOSITION}[THEOREM]{Proposition}
\newenvironment{proposition}{\begin{PROPOSITION} \hspace{-.85em} {\bf :} }%
                            {\end{PROPOSITION}}
\newtheorem{DEFINITION}[THEOREM]{Definition}
\newenvironment{definition}{\begin{DEFINITION} \rm}%
                            {\end{DEFINITION}}
\newtheorem{CLAIM}[THEOREM]{Claim}
                            {\end{CLAIM}}
\newtheorem{EXAMPLE}[THEOREM]{Example}
                            {\end{EXAMPLE}}
\newtheorem{REMARK}[THEOREM]{Remark}
                            {\end{REMARK}}
							\newtheorem{NOTATION}[THEOREM]{Notation}
							                            {\end{NOTATION}}
\renewenvironment{proof}{\noindent {\bf Proof:} \hspace{.677em}}%
                     {}

%theorem

%black box

%\renewcommand{\eprf}{\bbox\vspace{0.1in}}
%\newcommand{\qed}{\bbox\vspace{0.1in}}
\DeclareMathAlphabet{\mathitbf}{OML}{cmm}{b}{it}

% (not)member of

% \sub will be used for subscript.

% \su will be used for superscript.

%right arrow

%left arrow

%double turnstile
%\newcommand{\sat}{\models}
%fat right arrow

%fat left arrow

%fat double arrow
%big or

%big and

%union

%intersection

%bold letters

\newcommand{\U}{{\bf U}}
\newcommand{\V}{{\bf V}}

%{{\cal X}}

\newcommand{\blemma}{\begin{lemma}}

\newcommand{\elemma}{\end{lemma}}

\newcommand{\bthm}{\begin{theorem}}
\newcommand{\ethm}{\end{theorem}}
\newcommand{\bprf}{\begin{proof}}
\newcommand{\eprf}{\end{proof}}
\newcommand{\bpro}{\begin{proposition}}
\newcommand{\epro}{\end{proposition}}
\newcommand{\bi}{\begin{itemize}}
\newcommand{\ei}{\end{itemize}}
\newcommand{\be}{\begin{enumerate}}
\newcommand{\ee}{\end{enumerate}}
\newcommand{\beq}{\begin{equation}}
\newcommand{\eeq}{\end{equation}}
\newcommand{\bcase}{\begin{cases}}
\newcommand{\ecase}{\end{cases}}
\renewcommand{\mathit}{\emph}

\begin{document}

%\begin{titlepage}              
	
\title{On Constraint Definability \\ in  Tractable Probabilistic Models}

\author{Ioannis Papantonis\institute{University of Edinburgh, \\ email:\tt~ i.papantonis@sms.ed.ac.uk} \and Vaishak Belle\institute{University of Edinburgh \& Alan Turing Institute, \\ email:\tt~ vaishak@ed.ac.uk}}

\maketitle

\begin{abstract}
Incorporating constraints is a major concern in probabilistic machine learning. A wide variety of problems require predictions to be integrated with reasoning about constraints, from  modelling routes on maps to approving loan predictions. In the former,  we may require the prediction model to respect the presence of physical paths between the nodes on the map, and in the latter, we may require that the prediction model respect fairness constraints that ensure that outcomes are not subject to bias. Broadly speaking, constraints may be probabilistic, logical or causal, but the overarching challenge is to determine if and how a model can be learnt that handles all the declared constraints. To the best of our knowledge, this is largely an open problem. In this paper, we consider a mathematical inquiry on how the learning of  tractable probabilistic models, such as sum-product networks, is possible while incorporating constraints.
\end{abstract}

\section{Introduction}
%The expressive power of artificial intelligence methods has led to the application of this paradigm in a variety of domains, addressing a wide range of problems, from modelling routes on maps \cite{Shen2018ConditionalPM} to loan approval \cite{loan}. In many of these problems there is some prior knowledge available, but it is not always clear how to incorporate it in the model during training time. A fundamental challenge is that background information may come in different forms, such as independency constraints \cite{Zemel:2013:LFR:3042817.3042973,article3,Hardt:2016:EOS:3157382.3157469} and propositional expressions \cite{KR148005,pmlr-v80-xu18h}, rendering the task of designing a model able to handle all of them, elusive.\\
%Apart from interpreting constraints as pre-existing information, another interesting way to utilize them is in order to enforce a set properties on the resulting models. This is a different setting, since, for example, historic data on college admissions exhibit a clear bias based on the gender or race \cite{Leonard1999,Silverstein2000StandardizedTT}. There is an abundance of problems where the data are biased, simply because those in charge of deciding were biased, indeed. In this case, constraints could be used in order to eliminate this behaviour.\\
Incorporating constraints is a major concern in data mining and probabilistic machine learning \cite{Raedt:2010:CPD:2898607.2898874,inproceedings10,KR148005}. A wide variety of problems require the prediction to be integrated with reasoning about constraints, from modelling routes on maps \cite{Shen2018ConditionalPM,pmlr-v80-xu18h} to approving loan predictions \cite{loan}. That is, when modelling routes, we may require the prediction model to respect the presence of physical paths between nodes on the map, in the sense of disallowing impossible or infeasible paths. Analogously, when approving loans, we may have categorical requirements that loans should not be approved for those with a criminal record, but we may also have conditional constraints for eliminating bias, e.g, the prediction should not penalize the individual based on gender.

Broadly, background information may come in different forms, including independency \cite{Zemel:2013:LFR:3042817.3042973,article3} constraints and logical formulas \cite{KR148005,pmlr-v80-xu18h}, but of course the challenge is if and how we are able to provide (or learn) a model that is able to handle all the declared constraints. To the best of our knowledge, this is largely an open problem, at least in the sense of providing a general solution to a certain class of probabilistic models.

In addition to incorporating prior knowledge as constraints for training a probabilistic model, a second and equally significant way to utilize constraints is in order to enforce a set of properties on the resulting models. For example, historic data on college admissions exhibit a clear bias based on gender or race \cite{Leonard1999,Silverstein2000StandardizedTT}. More generally, there is an abundance of data that reflect historical or cultural biases, prompting the rapid development of the area of fair machine learning \cite{Zemel:2013:LFR:3042817.3042973,article3,Hardt:2016:EOS:3157382.3157469,GrgiHlaa2016TheCF}. Roughly, the idea is to place a constraint (e.g. a formalization that captures, for example, demographic parity \cite{article3} or equality of opportunity \cite{Hardt:2016:EOS:3157382.3157469}) on the predictions of the resulting model so that biased behaviour is not exhibited.

In this paper, we consider a mathematical enquiry on the definability of constraints while training/learning a probabilistic model. Note however that performing inference on probabilistic models is a computationally intractable problem \cite{article50}. This has given rise to probabilistic tractable models \cite{KR148005,6130310} where conditional or marginal distributions can be computed in time linear in the size of the model. Although initially limited to low tree-width models \cite{bach2002thin}, recent tractable models such as sum product networks (SPNs) \cite{SPN_structure_learning,6130310} and probabilistic sentential decision diagrams (PSDDs) \cite{KR148005,liang2017learning} are derived from arithmetic  circuits  (ACs)  and  knowledge  compilation  approaches, more generally \cite{choi2017relaxing,darwiche2002logical}, which exploit efficient function representations and also capture high tree-width models. These models can also be learnt from data \cite{SPN_structure_learning,liang2017learning} which leverage the efficiency of inference. Consider that in classical structure learning approaches for graphical models, once learned, inference would have to be approximated, owing to its intractability. In that regard, such models offer a robust and tractable framework for learning and inferring from data. Owing to these properties and their increasing popularity for a wide range of applications \cite{Choi:2015:TLS:2832581.2832649,Liang2018LearningLC,6130310}, we focus on this class of models in our work.

We are organised as follows. We first review the recent advances in constrained machine learning. Then we briefly review SPNs, and some preliminaries on constrained optimisation. We then turn to our main results. Finally, we conclude with discussions.

\section{Related work and Context}

During the last years, there have been ongoing attempts to address the problem of incorporating constraints during training or in prediction. However, most approaches focus either on logical constraints or probabilistic constraints, but not both, in a bespoke manner. For example, \cite{pmlr-v80-xu18h} examine the problem of imposing certain structure in the outcome of a classification algorithm. They approach this by adding an additional term in the objective function, one accounting for the probability of a state satisfying the given constraint. \cite{MrquezNeila2017ImposingHC} consider the case of training a neural network under some constraints. They create two variants of this problem, one where results from optimization theory are utilized in order to efficiently solve the problem, under hard constraints, as well as a relaxation of this problem, with soft constraints \cite{Gill81,Flet87}, where terms corresponding to the constraints are added into the objective function.

Alternative ways to utilize prior knowledge have been proposed as well, such as \cite{Stewart:2017:LSN:3298483.3298610}. In this work, the authors propose a framework for the semi-supervised training of neural networks. The key insight is that pre-existing knowledge can be used to create a regularizer, prompting the network to satisfy this information. 

Other interesting approaches stem from the field of knowledge representation. Probabilistic sentential decision diagrams (PSDDs) are representations of probability distributions over a propositional theory. One of the advantages of this formalism is that it is straightforward to incorporate logical constraints into the model. Due to this feature, as well as their high performance, PSDDs have been utilized in a wide range of applications, including preference learning \cite{Choi:2015:TLS:2832581.2832649} and modelling route distributions on a map \cite{Shen2018ConditionalPM}. However, up to this point, one of their limitation has been that it is not clear how to incorporate probabilistic assumptions regarding the variables. 

Data mining is an other field that utilizes constraints, this time in order to recover sets of variables satisfying some properties. \cite{Raedt:2010:CPD:2898607.2898874} attempt to develop a structured way to apply constrained programming techniques in pattern mining or rule discovery. A key difference between this method and the ones we mentioned on the previous paragraph, however, is that, in this case, the result consists of the valid assignments, and there is no predictive model.

Introducing constraints has been explored in other settings as well, such as in order to control a model's complexity. \cite{inproceedings10} consider an approach where they constrain the expected value of a quantity, modelled using open-world probabilistic databases. By doing that, they go on to show how this constrain strengthens the semantics of such databases. However, since the new problem is difficult, in the general case, they rely on approximating the corresponding solution. 

%Our approach addresses that exact point, in a model agnostic setting (as long as it is tractable).\\
Our contribution lies in introducing an approach for training generative models under probabilistic constraints. We borrow concepts from optimization theory and develop a paradigm related to \cite{MrquezNeila2017ImposingHC}. A key difference is that their approach, although similar in spirit, 
%possible to enforce constraints on the model
takes into account constraints that are expressed in terms of the model's variables. Thus, they correspond to functional relationships that the output variables should respect, so, consequently, they are not of a probabilistic nature. Our approach does allow for probabilistic constraints, as well as it does not demand them to be directly expressed as an equation (or a system of equations). Indeed, in the following sections, we will provide insights about the link between the probabilistic constraints and the system of equations they induce. Interestingly, even when using generative models, where it is easier to specify probabilistic dependencies, new challenges arise. Perhaps the most notable one is coming up with an efficient way to answer conditional or marginal queries (since the constraints will probably involve some of these quantities).

In our proposed framework we suggest to utilize tractable probabilistic models \cite{6130310,KR148005}, where conditional or marginal distributions can be computed in time linear in the size of the model, in order to overcome this challenge. Specifically, we will base our presentation on sum-product networks (SPNs) \cite{6130310}. SPNs are instances of arithmetic circuits (ACs) \cite{choi2017relaxing} that compactly represent the network polynomial \cite{Darwiche:2003:DAI:765568.765570} of a Bayesian network (BN). In the presence of latent variables they can also be seen as a deep architecture with probabilistic semantics \cite{6130310}, leading to numerous extensions, e.g.,  for mixed discrete-continuous domains \cite{inproceedings18}, and applications, including classification \cite{Liang2018LearningLC} and computer vision \cite{6130310}.

In this paper we explore the following: can SPNs (or, generally, tractable models) be used in order to train generative models subject to probabilistic constraints? Furthermore, what kind of constraints on the model's variables are induced by this procedure? We demonstrate how to incorporate various types of probabilistic relationships into the model, using different optimization approaches, specifically targeting hard and soft constraints. Furthermore, we provide a discussion about how our goal relates to other approaches, developed for training classification models.

%A common limitation of all the approaches presented above is that they do not address how to take probabilistic constraints into account. For example, using the method proposed in (Imposing Hard Constraints ref), although it is possible to enforce constraints on the model's parameters, the correspondence between this and probabilistic constraints is not clear. 
\section{Background}
In this section we will briefly review SPNs, some causality related concepts, as well as some optimization approaches.\\
\subsection{SPNs}
\begin{figure}[t]
  \centering
    \includegraphics[scale=1.3]{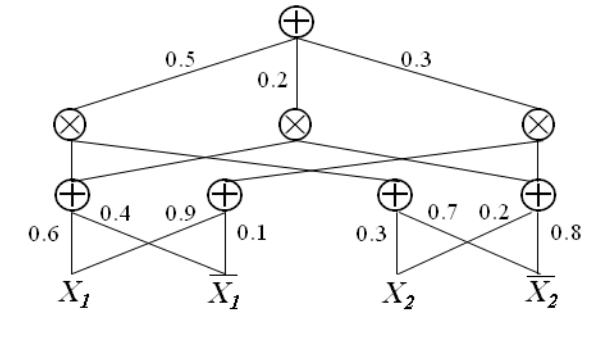}
  \caption{An example of an SPN representing a naive Bayes mixture, over variables $X_1,X_2$, taken from \cite{6130310}}
  \label{spn}
\end{figure}
SPNs are rooted directed graphical models that provide for an efficient way of representing the network polynomial \cite{Darwiche2000ADA} of a BN \cite{6130310}, as a multilinear function $\sum_{\textbf{x}}f(\textbf{x})\prod_{n=1}^{N} \mathbbm{1}_{x_n}$. An example of such an SPN can be seen in Figure \ref{spn}, taken from \cite{6130310}. Here $f(\cdot)$ is the  (possibly unormalized) probability distribution of the BN, $\textbf{x}$ is a vector containing all the variables of the model, i.e.,  $x_1,\cdots,x_N$, the summation is over all possible states, and $\mathbbm{1}_{x_n}$ is the indicator function. An SPN $\cal S$ over Boolean variables $x_1,\cdots,x_N$ has leaves corresponding to indicators $\mathbbm{1}_{x_1},\cdots,\mathbbm{1}_{x_n}$ and $\mathbbm{1}_{\bar x_1},\cdots,\mathbbm{1}_{\bar x_n}$
and whose internal nodes are sums and products.

Any edge exiting a sum node has a non-negative weight assigned to it. The value of a product node is the product of its children, while the value of a sum node is a weighted sum of its children, $\sum_{u_j \in Ch(u_i)} w_{ij}{\cal S}_{j}(\textbf{x})$, where $Ch(u_i)$ is the set containing the children of node $u_i$, and ${\cal S}_{j}$ is the sub-SPN rooted at node $u_j$. SPNs can represent a wide class of models, including weighted mixtures of univariate distributions; see  \cite{6130310} for discussions.
\subsection{Causality}
Causal inference is an approach where, apart from probabilistic information, extra information about the mechanism governing the variables' interactions are encoded into the model. This allows reasoning about more complex queries, such as interventions and counterfactuals \cite{Pearl:2009:CMR:1642718}. These can be seen as extending standard probabilities with the ability to infer what happens if a variable is forced to attain a value, by an external intervention, or what would happen had a variable obtained a different value from the one it obtained in the actual world. 

The usual setting is to represent the set of probabilistic dependencies through a BN,  but on top of that encode the specific mechanism that determines the value of each variable, too. In this sense, it is more general than just having a BN, since we not only possess a distribution over the variables, but also a set of equations. In what follows we denote by $\V$ the set of variables that are internal to the model, and by $\U$ the exogenous or external variables (that act as random, latent, factors). We use $\pazocal{R}$ to denote the set containing the plausible values of each variable. Every endogenous (internal) variable is assigned an equation determining its value as a function of both its endogenous and exogenous parents in the BN, called structural equation. 
 
 %We now present a formal definition. %\\\\

\begin{definition}\label{defn:causalmodel}  A causal model $\pazocal{M}$ is a pair $(\pazocal{S},\pazocal{F} )$ where $\pazocal{S}$ is a signature $(\U, \V,\pazocal{R})$ and $\pazocal{F}$ 
is a set of structural equations $ \{\pazocal{F}_V : V \in \V \}.$
\end{definition}

An interesting remark is that, although the structural equations are essential for the specification of the model, it turns out that once you have a fully specified probabilistic distribution, it is possible to answer interventional queries without possessing the functional equations \cite{Pearl2009CausalII}. There are various ways to achieve that, such as utilizing the \textit{rules of do-calculus} \cite{Pearl:2009:CMR:1642718}, but we will go for a different approach, using minimal information about the structure of the underlying BN. We are going to utilize the following formula to compute the effect of intervening on a variable, $A$, on the rest of the model's variables, $\textbf{X}_{-A}$ \cite{Pearl:2009:CMR:1642718}:
\begin{align*}
\Pr(\textbf{X}_{-A}| do(A=\alpha)) = \frac{\Pr(\textbf{X}_{-A},A=\alpha)}{\Pr(A=\alpha | pa_{A})}
\end{align*}
where $pa_{A}$ denotes the set of A's parents. Looking at this equation we see that the only thing we need to specify is $pa_{A}$. This has the advantage of requiring only local information, so it is not necessary to specify the full BN, but only the variables that have an effect on the intervened variable. We will return to this observation in the following sections.
%Under the closed world assumption and the fact that there are no unobserved variables in our model, the only thing we need to specify in order to utilize this formula is the $pa_A$ set. \\

% some variables, for example $A$ and $B$, to be independent of each other in the final model. It is well known that the definition of this property is $\Pr(A,B)=\Pr(A)\Pr(B)$
\subsection{Optimization}
\begin{figure}[t]
  \centering
    \includegraphics[scale=0.9]{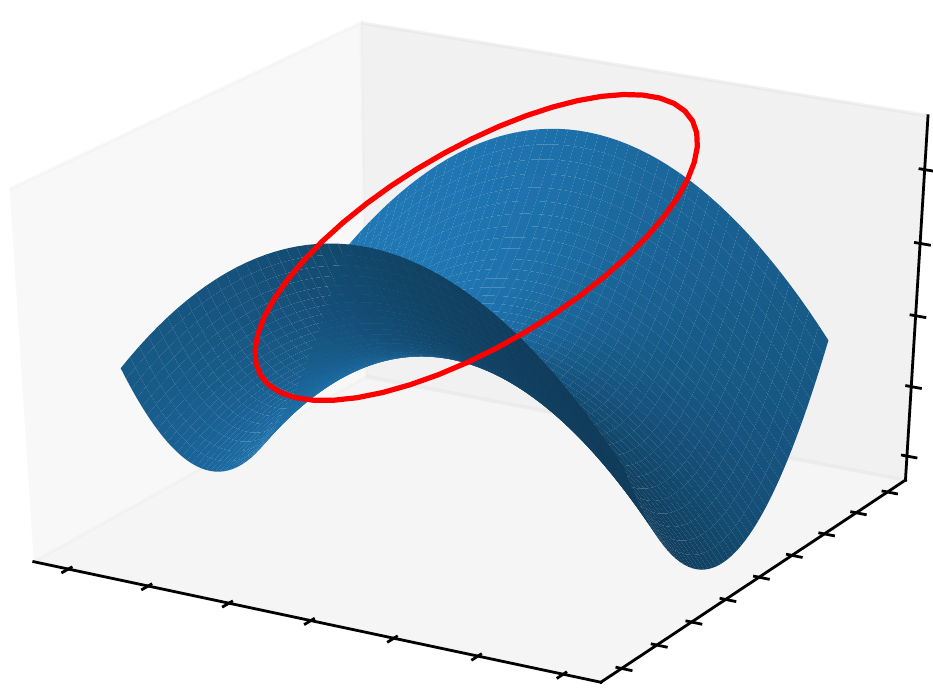}
  \caption{An example of optimizing a function, while constraining the solution to lie in an ellipse}
  \label{opt}
\end{figure}
Constrained optimization is a discipline concerned with developing techniques allowing for optimizing functions under a set of constraints. For example, Figure \ref{opt} depicts the problem of minimizing a function, while requiring the solution to belong in an ellipse.  One of the most common ways to address that, is to transform the objective function, so it takes the constraints into account. The problem of interest is to maximize the likelihood of a model (with a vector of parameters \textbf{w}), $L(\textbf{w})$ under constraints $C_i(\textbf{w})=0,~1\leq i \leq N$, so:
\begin{align*}
&max_{\textbf{w}} L(\textbf{w}) \\
&s.t.~ C_1(\textbf{w})=0\\
&~~~~~~~~~ \cdots \\
&~~~~~~ C_N(\textbf{w})=0
\end{align*}
The transformed objective function, $\Lambda$, introduces a number of auxiliary variables, as many as the constraints, $\lambda_1,\cdots,\lambda_N$, and takes the following form $\Lambda(\textbf{w},\lambda_1,\cdots,\lambda_N) = L(\textbf{w}) + \sum_{n=1}^N  \lambda_n C_n(\textbf{w}) $. It can be shown that all of the solutions of the original problem correspond to stationary points of the new objective function \cite{lagrange}. 

There are various numerical methods to solve this problem, such as projected gradient descent, where an initial vector $\textbf{w}^{(0)}$ is updated incrementally, and then gets projected onto the surface defined by the constraints, until it converges to a solution of the problem. Furthermore, in cases where the objective function is in a special form, such as a quadratic polynomial, other approaches might be more efficient. See \cite{MrquezNeila2017ImposingHC} for a more extensive discussion on the subject.

Optimization problems like the above require all of the feasible solutions to satisfy the constraints. These constraints are referred to as \textit{hard}. Alternative formulations of the problem could yield feasible solutions not satisfying the constraints. These constraints are called \textit{soft}, because instead of demanding the solutions to adhere to them, we introduce a penalty term in the objective function, for each time they get violated. For example, if all of the $C_i(\textbf{w})=0,~1\leq i \leq N$ were treated as soft constraints, then after setting  $\lambda_1,\cdots,\lambda_N$ to some value reflecting the cost of violating the corresponding constraint, the soft version of the problem would be to maximize the function $L(\textbf{w}) + \sum_{n=1}^N \lambda_n C_n(\textbf{w}) $, so each time some $C_i$ is not equal to zero, it induces a penalty. In this case, all $\lambda_i$ are treated as hyperparameters, so they are specified before the optimization takes place. Furthermore, now we are interested in the maxima of this function, as opposed to the case of hard constraints, where we were interested in the stationary points of the transformed function. This problem could be solved using a wide range of well-known optimization algorithms, such as Adam \cite{Adam}.
\section{Main Results}
The majority of contemporary machine learning models rely on maximum likelihood (ML) estimation for setting the values of their parameters. The approaches we discussed earlier transform the optimization objective, enhancing the resulting model with additional properties. One limitation, in such a setting, is that the constraints are expressed in terms of the parameters, directly. However, in most models it is not clear how probabilistic relationships can be expressed in term of the parameters, making it difficult to utilize the existing approaches in order to achieve our goal.  

Our approach is motivated from such formulations and builds on the following idea: since the majority of machine learning models are differentiable and utilize ML estimation, if we could find a class of models where it is feasible to uncover a correspondence between parameters and probabilities, then we could use constrained optimization approaches, in order to equip the model with additional properties. 

As we will see in what follows, most probabilistic constraints are expressed, by definition, as an equality between probabilities. For example, if we want to incorporate the assumption that "$A$ is independent of $B$", we have to ensure that the equality $\Pr(A,B)=\Pr(A)\Pr(B)$ holds in the trained model (we will only present equality constraints, but following our strategy one could incorporate inequality constraints as well).
\subsection{Conditional constraints}
We will start with presenting the case of constraining the likelihood so it enforces the various conditional distributions of some variables to be equal. Formally, assume a variable $Y$, a categorical variable $A$, and a set of variables, $\bf X$. We are interested in modelling the joint distribution of these variables, but we would also like to incorporate some background knowledge into the model, specifically we would like it to satisfy the condition $\Pr(Y| A=\alpha, \textbf{X}) = \Pr(Y| A=\alpha',\textbf{X})$, where we assume that $A$ is a binary variable, in order to make the presentation easier to follow. In this equation we do not explicitly specify the values of the variables in $\bf X$, rather we want the condition to hold regardless of their specific instantiation. We could also be interested in constraints of the form $\Pr(Y| A=\alpha) = \Pr(Y| A=\alpha')$, where in this case we do not condition on \textbf{$X$}. Constraints similar to this, appear in the fair AI literature \cite{Zemel:2013:LFR:3042817.3042973,article3,Hardt:2016:EOS:3157382.3157469,GrgiHlaa2016TheCF}, where the objective is to eliminate bias, such as racial discrimination, from predictive models, by enforcing an appropriate set of conditions. 

%Other kinds of rules can also be incorporated, such as $\Pr(Y, A=\alpha) = \Pr(Y, A=\alpha')$, allowing for greater adaptability of the proposed method to different scenarios, where a variety of interactions between variables exist.\\
%This condition could be seen as a way to integrate background knowledge in a model in a similar way as some of the fairness approaches we reviewed in the previous section. In here, however, any variable can be utilized in $Y$'s place, in contrast to the fairness methods, where only the conditional distribution of the predicted variable could be manipulated to satisfy the given constraints. This provides for more flexibility in model designing, since any assumption, about an arbitrary combination of variables, can be easily expressed and incorporated in the model.\\

An additional remark about the flexibility of expressing constraints in this form can be seen when considering context-specific properties. In the above formulation we left the values of $\bf X$ unspecified, but there might be cases where it is known that some properties hold only when some of the remaining variables acquire specific values. 
%For example, while the distribution of a person's GPA conditioned on the difficulty of the courses he/she undertook, changes depending on the difficulty level, once his/hers grades are known, then the difficulty does not matter, anymore.
To take such information into account we should just adapt the constrain so some of the variables in \textbf{$X$} are set to their corresponding values.

As we have stated above, we are going to use SPNs to model the data, due to their provable tractability and applicability in a wide range of problems and the fact that a connection between probabilistic queries and the model's parameters can be established. As we will see, this is crucial for our approach, in the sense that in the general case it is not clear how to achieve this connection, but the polynomial representation of SPNs allow us to uncover it and use it to train such a model under a set of probabilistic constraints. 

%Assuming the model's likelihood function is $L(\textbf{X},\textbf{w})$, where $\textbf{X}$ is a dataset and $\textbf{w}$ are the model's parameters, then the objective is the following:

The following results establishes the relationship between probabilistic constraints and the parameters of an SPN, $\textbf{w}$.  
\begin{theorem}
Let $\cal S$ be an SPN representing the joint distribution of variables $X_1,\cdots,X_n$. Let $X_i,X_j$ be two binary variables, then the constraint $\Pr(X_i|X_j=0) = \Pr(X_i|X_j=1)$ is equivalent to a multivariate linear system of two equations on the SPN's parameters.
\end{theorem}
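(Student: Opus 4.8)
The key fact to exploit is the multilinear structure of an SPN. Recall that $\mathcal{S}(\textbf{x}) = \sum_{\textbf{x}'} f(\textbf{x}') \prod_{n} \mathbbm{1}_{x'_n}$, so the value computed at the root is a multilinear polynomial in the leaf indicators, and every probability query is obtained by setting the indicators appropriately and reading off the root value. Concretely, the marginal $\Pr(X_j = c)$ for $c \in \{0,1\}$ is computed by setting $\mathbbm{1}_{x_j} = c$, $\mathbbm{1}_{\bar x_j} = 1 - c$, and all other indicators to $1$; the joint $\Pr(X_i = b, X_j = c)$ is computed by additionally fixing the $X_i$ indicators. I would first write each of the four quantities $\Pr(X_i = b, X_j = c)$, $b,c \in \{0,1\}$, as the root value of $\mathcal{S}$ under the corresponding indicator settings, and observe that as a function of the edge weights $\textbf{w}$ each such value is, by the recursive definition (product nodes multiply children, sum nodes take weighted sums $\sum_j w_{ij} \mathcal{S}_j$), a polynomial in $\textbf{w}$ — indeed linear in the weights on any single root-to-leaf path, but in general a genuine polynomial. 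So at face value the constraint is polynomial, not linear, in $\textbf{w}$; the point of the theorem is that the \emph{specific} combination appearing in the conditional-independence constraint collapses to something linear.

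The second step is to rewrite the conditional constraint $\Pr(X_i \mid X_j = 0) = \Pr(X_i \mid X_j = 1)$ — which, spelled out over the two values of $X_i$, is the pair of equations $\Pr(X_i = b \mid X_j = 0) = \Pr(X_i = b \mid X_j = 1)$ for $b = 0, 1$ — by clearing denominators. Each becomes
\begin{align*}
\Pr(X_i = b, X_j = 0)\,\Pr(X_j = 1) = \Pr(X_i = b, X_j = 1)\,\Pr(X_j = 0).
\end{align*}
Now I would use the elementary identities $\Pr(X_j = c) = \sum_{b} \Pr(X_i = b, X_j = c)$ to express everything in terms of the four joint quantities $p_{bc} := \Pr(X_i = b, X_j = c)$. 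The constraint for a given $b$ becomes $p_{b0}(p_{01} + p_{11}) = p_{b1}(p_{00} + p_{10})$, i.e. $p_{b0} p_{\bar b 1} = p_{b1} p_{\bar b 0}$, which is quadratic in the $p_{bc}$ — but crucially the two equations (for $b=0$ and $b=1$) are the \emph{same} equation $p_{00}p_{11} = p_{01}p_{10}$. So the conditional constraint reduces to the single bilinear relation $p_{00}p_{11} - p_{01}p_{10} = 0$ among the four joints, which is exactly the statement that $X_i \perp X_j$.

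The third and decisive step is to argue that this bilinear relation among the $p_{bc}$ pulls back to a \emph{linear} system in $\textbf{w}$, and that it is a system of \emph{two} equations. Here I would invoke the structure of the SPN evaluated at these indicator settings: because only the indicators of $X_i$ and $X_j$ are being toggled while all others are set to $1$, the four polynomials $p_{bc}(\textbf{w})$ are obtained from one another by flipping which $X_i$- and $X_j$-leaves are ``on,'' and one can express them through a common set of partial-evaluation quantities (the values of the sub-SPNs that do not depend on $X_i$ or $X_j$, together with the coefficients with which the $X_i$- and $X_j$-leaves enter). Writing $p_{bc}$ in terms of these shared quantities, the product $p_{00}p_{11} - p_{01}p_{10}$ should telescope so that the genuinely quadratic cross-terms cancel, leaving an expression that is linear (in fact affine, or homogeneous after normalization) in the weights; splitting off the constraint into its components — or using that the normalization $\sum_{bc} p_{bc} = 1$ (the partition function) is itself one linear equation in $\textbf{w}$, and the independence relation a second — yields exactly two linear equations. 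I expect \textbf{this last step to be the main obstacle}: one has to be careful about how the weights on edges feeding the $X_i$ and $X_j$ leaves interact, and to make precise the sense in which ``the quadratic part cancels'' — this likely requires either an assumption on the SPN's structure (e.g. decomposability/completeness, so that $X_i$ and $X_j$ factor cleanly within each product node) or a direct bookkeeping argument on root-to-leaf paths. Completeness and decomposability are standard for SPNs, so I would state them as running assumptions and then carry out the cancellation by induction on the SPN structure.
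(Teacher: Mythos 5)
Your first two steps match the paper's: clear denominators to get $\Pr(X_i{=}b,X_j{=}0)\Pr(X_j{=}1)=\Pr(X_i{=}b,X_j{=}1)\Pr(X_j{=}0)$ and express the probabilities through the network polynomial. (The paper gets its ``two equations'' exactly as the two instances $b=0,1$, obtained by keeping $X_i$ symbolic and equating the coefficients of $\mathbbm{1}_{x_i}$ and $\mathbbm{1}_{\neg x_i}$; your further reduction to the single relation $p_{00}p_{11}=p_{01}p_{10}$ is fine mathematically but is not what the count of two refers to, and your alternative suggestion of ``normalization plus independence'' as the two equations is not the intended system.)

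The genuine gap is your third step. You take ``the SPN's parameters'' to be the edge weights $\textbf{w}$ and plan to show that the bilinear relation among the joints ``telescopes'' so that the quadratic cross-terms cancel, leaving something literally linear (affine) in $\textbf{w}$, possibly by induction using completeness and decomposability. No such cancellation exists: $p_{00}p_{11}-p_{01}p_{10}$ is genuinely bilinear in the coefficients of the network polynomial and of still higher degree in the edge weights of a deep SPN, and decomposability/completeness do not change this — you can check it already on the three-variable canonical polynomial in the paper's Section~5, where the resulting equations such as $\theta_1+\theta_7=(\theta_1+\theta_3+\theta_5+\theta_7)(\theta_1+\theta_2+\theta_6+\theta_7)$ are visibly quadratic as polynomials. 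What the paper actually proves is weaker and does not need any cancellation: it treats the monomial coefficients $f(\textbf{x})$ (the $\theta$'s of the canonical polynomial) as the parameters, and observes that in each of the two cleared equations both sides are products of two sums ranging over \emph{disjoint} sets of states, so no parameter ever appears squared; ``linear'' in the theorem statement is used in this multilinear sense (degree one in each individual parameter), in deliberate contrast with the independence constraint of Theorem~3, where the marginal sums share terms and squared parameters appear, making that system ``quadratic.'' So the missing idea is not a clever cancellation but the right reading of the claim: identify the parameters with the network-polynomial coefficients and argue disjointness of the index sets in each product, rather than attempt to establish literal linearity in $\textbf{w}$, which is false.
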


\begin{proof}
Let ${\cal S}(\textbf{x})= \sum_{\textbf{x}} f(\textbf{x})\prod_{n=1}^{N} \mathbbm{1}_{x_n}$ 
be the network polynomial of an SPN. The equality $\Pr(X_i|X_j=1)= \Pr(X_i| X_j=0)$ can be rewritten as follow: 
\begin{align}
& \Pr(X_i|X_j=1)= \Pr(X_i|X_j=0) \implies \frac{ \Pr(X_i,X_j=1)}{\Pr(X_j=1)}= \frac{\Pr(X_i,X_j=0)}{\Pr(X_j=0)}\\ \nonumber
&\implies \Pr(X_i,X_j=1) \cdot \Pr( X_j=0) = \Pr(X_i,X_j=0) \cdot \Pr(X_j=1)
\end{align}
Next, we express the above probabilities in terms of $\cal S$ (where $X$ corresponds to the assignment $X=1$, and $\neg X$ to $X=0$):
\begin{align*}
&\Pr(X_i,X_j=1) = \sum_{\textbf{x}:x_i,x_j}f(\textbf{x})\mathbbm{1}_{x_i} + \sum_{\textbf{x}:\neg x_i,x_j}f(\textbf{x})\mathbbm{1}_{\neg x_i}\\
&\Pr(X_i,X_j=0) = \sum_{\textbf{x}:x_i,\neg x_j}f(\textbf{x})\mathbbm{1}_{x_i} + \sum_{\textbf{x}:\neg x_i, \neg x_j}f(\textbf{x})\mathbbm{1}_{\neg x_i}\\
&\Pr(X_j=1) = \sum_{\textbf{x}:x_j}f(\textbf{x})\\
&\Pr(X_j=0) = \sum_{\textbf{x}:\neg x_j}f(\textbf{x})
\end{align*} 
We now substitute these equation to (1) to get that:
\begin{align*}
& \sum_{\textbf{x}:\neg x_j}f(\textbf{x}) \cdot \sum_{\textbf{x}:x_i,x_j}f(\textbf{x})\mathbbm{1}_{x_i} + \sum_{\textbf{x}:\neg x_j}f(\textbf{x}) \cdot \sum_{\textbf{x}:\neg x_i,x_j}f(\textbf{x})\mathbbm{1}_{\neg x_i}=\\
& \sum_{\textbf{x}:x_j}f(\textbf{x}) \cdot \sum_{\textbf{x}:x_i,\neg x_j}f(\textbf{x})\mathbbm{1}_{x_i} + \sum_{\textbf{x}:x_j}f(\textbf{x}) \cdot \sum_{\textbf{x}:\neg x_i, \neg x_j}f(\textbf{x})\mathbbm{1}_{\neg x_i}
\end{align*}
This is an equality between polynomials, meaning that the coefficients must be equal, so:
\begin{align*}
& \sum_{\textbf{x}:\neg x_j}f(\textbf{x}) \cdot \sum_{\textbf{x}:x_i,x_j}f(\textbf{x}) = \sum_{\textbf{x}:x_j}f(\textbf{x}) \cdot \sum_{\textbf{x}:x_i,\neg x_j}f(\textbf{x})\\
&  \sum_{\textbf{x}:\neg x_j}f(\textbf{x}) \cdot \sum_{\textbf{x}:\neg x_i,x_j}f(\textbf{x}) = \sum_{\textbf{x}:x_j}f(\textbf{x}) \cdot \sum_{\textbf{x}:\neg x_i, \neg x_j}f(\textbf{x})
\end{align*}
These constraints are expressed in terms of the model's parameters and they are, clearly, multivariate polynomials, specifically linear ones, since, in each equations, there are two products, so if we look, for example, at the ones in the first equation,  $\sum_{\textbf{x}:x_i,x_j}f(\textbf{x}) \cdot \sum_{\textbf{x}:\neg x_j}f(\textbf{x})$ and $\sum_{\textbf{x}:x_i,\neg x_j}f(\textbf{x}) \cdot \sum_{\textbf{x}:x_j}f(\textbf{x})$, the terms that appear in one factor don't appear on the other one, since the summation is performed over disjoint sets. \qed
\end{proof}

\subsection{Interventional constraints}
A more complex class of distributions, used extensively in causal modelling \cite{Pearl2009CausalII}, are interventional ones. They represent the probability of a variable after an external intervention on another variable. It is not always possible to estimate them using observational distributions, but when assuming that all of the model's variables are observed, then it is possible to express the interventional distribution in terms of the observational one \cite{Pearl:2009:CMR:1642718}. For the rest of this section we will make the closed-world assumption, meaning that there are no unobserved confounders between the variables. 

The new objective is to train a model while incorporating constraints of the form $\Pr(\textbf{X}_{-A}| do(A=\alpha)) = \Pr(\textbf{X}_{-A}|do( A=\alpha'))$, where $\textbf{X}_{-A}$ denotes the set of all the model's variables, excluding $A$. Constraints of this kind have powerful implications regarding the causal mechanisms between $A$ and the rest of the variables. This could be seen clearly when considering similar constraints to the one above, such as $\Pr(\textbf{X}_{-A}| do(A=\alpha)) = \Pr(\textbf{X}_{-A})$, which means that setting $A$ to a certain value does not influence the distribution of the rest of the variables. Intuitively, this means that $A$ has no causal influence on any of the remaining variables. 

As we have mentioned in a previous section, we will base our approach on a well known formula connecting the interventional to the observational distribution \cite{Pearl:2009:CMR:1642718}:
\begin{align*}
\Pr(\textbf{X}_{-A}| do(A=\alpha)) = \frac{\Pr(\textbf{X}_{-A},A=\alpha)}{\Pr(A=\alpha | pa_{A})}
\end{align*}

Depending on the application, it is possible there is enough background knowledge available to specify $pa_A$. There might be other applications though, where this is not an option, due to the complexity of the problem or insufficient a priori information. In these cases, methods from the field of \textit{feature selection} could be utilized. The aim of these methods is to identify the Markov Blanket of a set of variables, so it is closely related to specifying the parents of a variable. Conditioning on the Markov Blanket, instead of just the parents, can serve as an approximation of the desired distribution, so there is a wide range of methods \cite{Zhang:2011:KCI:3020548.3020641,PetBuhMei15,Zheng:2018:DNT:3327546.3327618} for performing this step. Furthermore, there are some existing approaches that under some assumptions recover just the parent set of a variable \cite{PetBuhMei15}, so these could be employed, instead. Assuming we possess the parents of the variable of interest, we can show the following:\\
\begin{theorem}
Let $\cal$ S be an SPN representing the joint distribution of variables $X_1,\cdots,X_n$. Let $X_i$ be a binary variable, then the constraint $\Pr(\textbf{X}_{-i}|do(X_i=0)) = \Pr(\textbf{X}_{-i}|do(X_i=1))$ is equivalent to a multivariate linear system of equations on the SPN's parameters.
\end{theorem}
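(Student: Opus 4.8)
The plan is to reuse the proof template of the preceding theorem, the only change being that the input identity is now the do-calculus formula $\Pr(\textbf{X}_{-A}\mid do(A=\alpha)) = \Pr(\textbf{X}_{-A},A=\alpha)/\Pr(A=\alpha\mid pa_{A})$ recalled above, rather than the definition of conditional probability. First I would substitute this formula into both sides of the constraint $\Pr(\textbf{X}_{-i}\mid do(X_i=0)) = \Pr(\textbf{X}_{-i}\mid do(X_i=1))$, write $\Pr(X_i=\alpha\mid pa_{X_i}) = \Pr(X_i=\alpha,pa_{X_i})/\Pr(pa_{X_i})$, and clear denominators. The factor $\Pr(pa_{X_i})$ occurs on both sides and cancels, leaving the polynomial identity
\[
\Pr(\textbf{X}_{-i},X_i=0)\cdot\Pr(X_i=1,pa_{X_i}) \;=\; \Pr(\textbf{X}_{-i},X_i=1)\cdot\Pr(X_i=0,pa_{X_i}) .
\]

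Next I would expand each of the four probabilities as a partial sum of the network polynomial ${\cal S}(\textbf{x}) = \sum_{\textbf{x}} f(\textbf{x})\prod_{n=1}^{N}\mathbbm{1}_{x_n}$, exactly as before: $\Pr(\textbf{X}_{-i},X_i=0) = \sum_{\textbf{x}:\neg x_i} f(\textbf{x})\prod_{n\neq i}\mathbbm{1}_{x_n}$ is a polynomial in the indicators of all variables other than $X_i$, whereas $\Pr(X_i=1,pa_{X_i}) = \sum_{\textbf{x}:x_i} f(\textbf{x})\prod_{n\in pa(i)}\mathbbm{1}_{x_n}$ is a polynomial in the indicators of $X_i$'s parents only (the other variables being summed out). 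Substituting these into the identity and multiplying out, both sides become polynomials in the indicator variables; reducing them to multilinear form via $\mathbbm{1}_{x}^2=\mathbbm{1}_{x}$ and $\mathbbm{1}_{x}\mathbbm{1}_{\neg x}=0$, I would then equate the coefficient of each monomial. The monomials that survive are indexed by the assignments to $\textbf{X}_{-i}$, so this yields a finite system, now with up to $2^{n-1}$ equations rather than the two of the conditional case — which is precisely why the statement claims only ``a system'' without pinning down its size.

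Finally I would argue, as in the previous proof, that each equation so obtained is linear in the sense used there: it has the shape (partial sum of $f$-values)$\cdot$(partial sum) $=$ (partial sum)$\cdot$(partial sum), where within each product the two factors range over disjoint sets of full assignments — one over $\{x_i=0\}$, the other over $\{x_i=1\}$, and the singled-out coefficient never reappears inside its companion sum — so no parameter multiplies itself. The step I expect to demand the most care is the bookkeeping when both factors of a product mention the indicators of $pa_{X_i}$: one must check that, after reduction, the coefficient of the monomial indexed by an assignment $\textbf{a}$ to $\textbf{X}_{-i}$ couples $f(\textbf{a},X_i=0)$ only with the sub-sum of $f$-values that agree with $\textbf{a}$ on $pa_{X_i}$, and that this coupling still respects the disjoint-support structure underlying the linearity claim. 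A secondary point I would state explicitly is that the cancellation of $\Pr(pa_{X_i})$ is legitimate as an identity of polynomials, and that — consistent with the earlier remark — only the local set $pa_{X_i}$, not the full Bayesian network, is needed to carry out the construction.
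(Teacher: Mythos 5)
Your proposal is correct and follows essentially the same route as the paper's proof: rewrite the constraint via the do-calculus formula, replace the conditional $\Pr(X_i=\alpha\mid pa_{X_i})$ by the joint after cancelling $\Pr(pa_{X_i})$, expand in the network polynomial, equate coefficients, and deduce linearity from the fact that within each product one factor sums over assignments with $X_i=0$ and the other over assignments with $X_i=1$, so the parameter sets are disjoint. In fact you spell out the coefficient-matching bookkeeping that the paper explicitly omits as ``lengthy and routine,'' but the underlying argument is the same.
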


\begin{proof}
We will prove this, following the same reasoning as in the previous proof, so we first need to rewrite the given constraint:
\begin{align*}
& \Pr(\textbf{X}_{-i}|do(X_i=0)) = \Pr(\textbf{X}_{-i}|do(X_i=1))\\
&\Rightarrow \frac{\Pr(\textbf{X}_{-i},X_i=0)}{\Pr(X_i=0 | pa_{X_i})} = \frac{\Pr(\textbf{X}_{-i},X_i=1)}{\Pr(X_i=1 | pa_{X_i})}\\ \nonumber
& \Rightarrow \Pr(\textbf{X}_{-i},X_i=0) \cdot \Pr(X_i=1 | pa_{X_i}) = \Pr(\textbf{X}_{-i},X_i=1) \cdot \Pr(X_i=0 | pa_{X_i})\\ \nonumber
& \Rightarrow  \Pr(\textbf{X}_{-i},X_i=0) \cdot \Pr(X_i=1 , pa_{X_i}) = \Pr(\textbf{X}_{-i},X_i=1) \cdot \Pr(X_i=0 , pa_{X_i})
\end{align*}
The next step is to express these probabilities in terms of the network polynomial and substitute them to the above expression. Since these computations are lengthy and routine, we will not present them here. The important observation is that it is not difficult to see that we end up with a system of multivariate polynomials, in this case, too. To prove they are linear ones as well, it suffices to note that in both products $\Pr(\textbf{X}_{-i},X_i=0) \cdot \Pr(X_i=1 , pa_{X_i})$ and $\Pr(\textbf{X}_{-i},X_i=1) \cdot \Pr(X_i=0 , pa_{X_i})$, the set of parameters involved in the first factor is disjoint with the one appearing in the second factor, since the parameters that remain after setting $X_i=0$ vanish when setting $X_i=1$ (and vice versa). \qed 
\end{proof}

\subsection{Independence constraints}
The last kind of constraints we will present are those enforcing independence between variables. One of the most common forms of available background knowledge is expressed in terms of probabilistic independence, including conditional independence. There are some already existing approaches, like the ones presented in earlier sections, allowing for incorporating rules expressed as propositional formulas within the model, but doing the same with probabilistic ones still poses a major challenge.

An approach can be shown to facilitate the above task, without requiring us to provide new models, but by just transforming the optimization objective. As we discussed earlier, the definition of independence itself could be utilized to express the corresponding constraint. 

We should also note that it is possible to incorporate conditional independence or context specific information within the model, too, using the exact same method. Although similar, since usually both of them relies on conditioning, each one provides different insights about the problem at hand. So, for example, conditional constraints could be of the form: if we know the value of a variable, $Z$, then $A$ and $B$ are independent. On the other hand, context specific independence is stronger, since it might state that only when $Z=z$ we know that $A$ and $B$ are independent. However, it is not difficult to see that each of these independencies can be expressed as $\Pr(A,B|Z) = \Pr(A|Z)\Pr(B|Z)$ and $\Pr(A,B|Z=z)=\Pr(A|Z=z)\Pr(B|Z=z)$, respectively.

Assuming, in this case as well, that the objective is to train an SPN satisfying constraints like the above, we can show that it amounts to optimizing a function over a set of multivariate quadratic polynomial constraints.
\begin{theorem}
Let $\cal S$ be an SPN representing the joint distribution of variables $X_1,\cdots,X_n$. Let $X_i,X_j$ be two binary variables, then the constraint $\Pr(X_i,X_j) = \Pr(X_i) \cdot \Pr(X_j)$ is equivalent to a multivariate quadratic system of four equations on the SPN's parameters.
\end{theorem}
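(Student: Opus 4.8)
The plan is to follow the template of the first two proofs: rewrite the independence constraint as an identity between network polynomials, substitute the definition of $\cal S$, and read off the induced system by comparing coefficients of the indicator variables. Writing ${\cal S}(\textbf{x}) = \sum_{\textbf{x}} f(\textbf{x}) \prod_{n=1}^{N} \mathbbm{1}_{x_n}$, I would first marginalise out every variable other than $X_i$ and $X_j$ (setting the corresponding leaf indicators to $1$). Then $\Pr(X_i,X_j)$ becomes a form that is homogeneous of degree two in the four symbols $\mathbbm{1}_{x_i},\mathbbm{1}_{\neg x_i},\mathbbm{1}_{x_j},\mathbbm{1}_{\neg x_j}$: its monomials are the four products $\mathbbm{1}_{x_i}\mathbbm{1}_{x_j}$, $\mathbbm{1}_{\neg x_i}\mathbbm{1}_{x_j}$, $\mathbbm{1}_{x_i}\mathbbm{1}_{\neg x_j}$, $\mathbbm{1}_{\neg x_i}\mathbbm{1}_{\neg x_j}$, with respective coefficients $\sum_{\textbf{x}:x_i,x_j} f(\textbf{x})$, $\sum_{\textbf{x}:\neg x_i,x_j} f(\textbf{x})$, $\sum_{\textbf{x}:x_i,\neg x_j} f(\textbf{x})$, $\sum_{\textbf{x}:\neg x_i,\neg x_j} f(\textbf{x})$; likewise $\Pr(X_i)$ is linear in $\mathbbm{1}_{x_i},\mathbbm{1}_{\neg x_i}$ with coefficients $\sum_{\textbf{x}:x_i} f(\textbf{x})$ and $\sum_{\textbf{x}:\neg x_i} f(\textbf{x})$, and analogously for $\Pr(X_j)$.

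Next I would substitute these expressions into the equality $\Pr(X_i,X_j) = \Pr(X_i)\cdot\Pr(X_j)$. The right-hand side is then also a degree-two form in the same four symbols, involving exactly the same four monomials, so, exactly as in the conditional case above, equating coefficients (equivalently, demanding the equality at each of the four joint instantiations of $X_i,X_j$) yields precisely the four equations
\begin{align*}
\sum_{\textbf{x}:x_i,x_j} f(\textbf{x}) &= \Big(\sum_{\textbf{x}:x_i} f(\textbf{x})\Big)\Big(\sum_{\textbf{x}:x_j} f(\textbf{x})\Big),\\
\sum_{\textbf{x}:\neg x_i,x_j} f(\textbf{x}) &= \Big(\sum_{\textbf{x}:\neg x_i} f(\textbf{x})\Big)\Big(\sum_{\textbf{x}:x_j} f(\textbf{x})\Big),\\
\sum_{\textbf{x}:x_i,\neg x_j} f(\textbf{x}) &= \Big(\sum_{\textbf{x}:x_i} f(\textbf{x})\Big)\Big(\sum_{\textbf{x}:\neg x_j} f(\textbf{x})\Big),\\
\sum_{\textbf{x}:\neg x_i,\neg x_j} f(\textbf{x}) &= \Big(\sum_{\textbf{x}:\neg x_i} f(\textbf{x})\Big)\Big(\sum_{\textbf{x}:\neg x_j} f(\textbf{x})\Big).
\end{align*}
In each of these the left-hand side is linear in the SPN parameters, while the right-hand side is a product of two linear forms, so the system is multivariate quadratic and consists of four equations, as claimed.

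What remains is to verify that the quadratic terms genuinely survive, i.e.\ that the system is not secretly linear the way the conditional constraint was. In that earlier case the two factors being multiplied on each side ranged over disjoint sets of assignments; here, in contrast, every assignment that fixes both $X_i$ and $X_j$ contributes simultaneously to $\sum_{\textbf{x}:x_i} f(\textbf{x})$ and to $\sum_{\textbf{x}:x_j} f(\textbf{x})$, so the product of the two corresponding linear forms contains true degree-two monomials --- in particular squares of the parameters attached to assignments with $X_i=1,X_j=1$ --- that have no counterpart on the left-hand side. I expect this bookkeeping, namely pinning down which parameters are shared between the two marginals and confirming that the degree-two part does not cancel, to be the only step needing care; everything else is a routine expansion of the network polynomial. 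Finally, it is worth noting that the four equations are not independent: summing them and using that $\Pr(X_i=0)+\Pr(X_i=1) = \Pr(X_j=0)+\Pr(X_j=1) = 1$ collapses them to three constraints, though the system is naturally presented as four.
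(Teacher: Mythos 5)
Your proposal is correct and follows essentially the same route as the paper: express $\Pr(X_i,X_j)$, $\Pr(X_i)$, $\Pr(X_j)$ via the network polynomial, substitute into the independence equation, equate coefficients of the four indicator monomials to obtain the same four equations, and argue quadraticity from the fact that the two marginal sums share terms (e.g.\ the all-ones assignment), so their product contains squared parameters. Your closing remark that the four equations are not independent is a small extra observation beyond the paper, but the core argument matches.
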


\begin{proof}
To prove this result it is not necessary to rewrite the given constraint, so we can start with expressing these probabilities in terms of $\cal S$:
\begin{align*}
&\Pr(X_i,X_j) = \sum_{\textbf{x}:x_i,x_j}f(\textbf{x})\mathbbm{1}_{x_i}\mathbbm{1}_{x_j} + \sum_{\textbf{x}:\neg x_i,x_j}f(\textbf{x})\mathbbm{1}_{\neg x_i}\mathbbm{1}_{x_j} \\
&~~~~~~~~~~~~~~~~+  \sum_{\textbf{x}:x_i,\neg x_j}f(\textbf{x})\mathbbm{1}_{x_i}\mathbbm{1}_{\neg x_j} + \sum_{\textbf{x}:\neg x_i,\neg x_j}f(\textbf{x})\mathbbm{1}_{\neg x_i}\mathbbm{1}_{\neg x_j}\\
&\Pr(X_i)= \sum_{\textbf{x}:x_i}f(\textbf{x})\mathbbm{1}_{x_i} + \sum_{\textbf{x}:\neg x_i}f(\textbf{x})\mathbbm{1}_{\neg x_i}\\
&\Pr(X_j)= \sum_{\textbf{x}:x_j}f(\textbf{x})\mathbbm{1}_{x_j} + \sum_{\textbf{x}:\neg x_j}f(\textbf{x})\mathbbm{1}_{\neg x_j}
\end{align*}
Next, we substitute these quantities to the constraint's equation, so we get that:
\begin{align*}
&\sum_{\textbf{x}:x_i,x_j}f(\textbf{x})\mathbbm{1}_{x_i}\mathbbm{1}_{x_j} + \sum_{\textbf{x}:\neg x_i,x_j}f(\textbf{x})\mathbbm{1}_{\neg x_i}\mathbbm{1}_{x_j} + \sum_{\textbf{x}:x_i,\neg x_j}f(\textbf{x})\mathbbm{1}_{x_i}\mathbbm{1}_{\neg x_j} \\
&+ \sum_{\textbf{x}:\neg x_i,\neg x_j}f(\textbf{x})\mathbbm{1}_{\neg x_i}\mathbbm{1}_{\neg x_j}
 = \sum_{\textbf{x}:x_i}f(\textbf{x}) \cdot \sum_{\textbf{x}:x_j}f(\textbf{x})\mathbbm{1}_{x_i} \mathbbm{1}_{x_j}\\
 &+ \sum_{\textbf{x}:x_i}f(\textbf{x}) \cdot \sum_{\textbf{x}:\neg x_j}f(\textbf{x}) \mathbbm{1}_{x_i} \mathbbm{1}_{\neg x_j}
 + \sum_{\textbf{x}:\neg x_i}f(\textbf{x}) \cdot \sum_{\textbf{x}:x_j}f(\textbf{x}) \mathbbm{1}_{\neg x_i} \mathbbm{1}_{x_j}\\
 &+ \sum_{\textbf{x}:\neg x_i}f(\textbf{x}) \cdot \sum_{\textbf{x}:\neg x_j}f(\textbf{x})\mathbbm{1}_{\neg x_i} \mathbbm{1}_{\neg x_j}
\end{align*}
Equating the coefficients we get the following system of equations:
\begin{align*}
& \sum_{\textbf{x}:x_i,x_j}f(\textbf{x}) = \sum_{\textbf{x}:x_i}f(\textbf{x}) \cdot \sum_{\textbf{x}:x_j}f(\textbf{x}) \\
& \sum_{\textbf{x}:\neg x_i,x_j}f(\textbf{x}) = \sum_{\textbf{x}:\neg x_i}f(\textbf{x}) \cdot \sum_{\textbf{x}:x_j}f(\textbf{x})\\
& \sum_{\textbf{x}:x_i,\neg x_j}f(\textbf{x}) = \sum_{\textbf{x}:x_i}f(\textbf{x}) \cdot \sum_{\textbf{x}:\neg x_j}f(\textbf{x}) \\
& \sum_{\textbf{x}:\neg x_i,\neg x_j}f(\textbf{x}) = \sum_{\textbf{x}:\neg x_i}f(\textbf{x}) \cdot \sum_{\textbf{x}:\neg x_j}f(\textbf{x})
\end{align*}
Each of these equations correspond to a multivariate polynomial, as in all the previous cases, but this time they are quadratic, instead. This is because, in each equation, the sums appearing on the right hand side have some terms in common. For example, looking at the first equation, the assignment setting all the variables equal to $1$ is compatible with both summations, so the term $f(x_1,\cdots,x_n)$ appears in both of them. Clearly, by multiplying them we end up with a squared parameter. \qed
\end{proof}

\section{Applying the framework}
In this section we will demonstrate how to derive the system of equations that correspond to a single constraint. Let's assume we would like to train an SPN, $\cal S$, over three binary variables, $X_1,X_2,X_3$, satisfying the property that $X_1$ and $X_2$ are independent. The canonical polynomial of $\cal S$ \cite{Darwiche:2003:DAI:765568.765570} is:
\begin{align*}
&{\cal S}(X_1,X_2,X_3,\neg X_1,\neg X_2, \neg X_3) = \theta_1 X_1X_2X_3 + \theta_2 \neg X_1X_2X_3 + \theta_3 X_1\neg X_2X_3\\  & + \theta_4 \neg X_1\neg X_2X_3
+ \theta_5 X_1\neg X_2\neg X_3 + \theta_6 \neg X_1X_2\neg X_3 + \theta_7 X_1X_2\neg X_3\\ & + \theta_8 \neg X_1\neg X_2\neg X_3
\end{align*}
where each $\theta_i$ is equal to the probability of the specific configuration of $X_1,X_2,X_3$ following it, so, for example, in the term $\theta_5 X_1\neg X_2\neg X_3$, $\theta_5 =\Pr( X_1,\neg X_2,\neg X_3)$

The joint probability of, say, $X_1,X_2$ is given by the above polynomial, after substituting both $X_3,\neg X_3$ by $1$. Furthermore, the probability of $X_1$ is given after substituting all of $X_2,\neg X_2, X_3,\neg X_3$ by $1$, whereas substituting $X_1,\neg X_1, X_3,\neg X_3$ by $1$, gives the probability of $X_2$.

At this point, it is time to utilize the condition we would like to enforce, $\Pr(X_1,X_2)=\Pr(X_1)\Pr(X_2)$. Substituting these probabilities by the corresponding polynomial, as discussed in the previous paragraph, yields the following:
\begin{align*}
&(\theta_1 + \theta_7) X_1 X_2  + (\theta_3 + \theta_5) X_1 \neg X_2 + (\theta_2 + \theta_6) \neg X_1 X_2 + (\theta_4 + \theta_8) \neg X_1 \neg X_2 \\ &  = 
  (\theta_1 + \theta_3 + \theta_5 + \theta_7)\cdot ( \theta_1 + \theta_2 + \theta_6 + \theta_7) X_1 X_2\\ & + (\theta_1 + \theta_3 + \theta_5 + \theta_7)\cdot ( \theta_3 + \theta_4 + \theta_5 + \theta_8) X_1 \neg X_2\\
& + (\theta_2 + \theta_4 + \theta_6 + \theta_8)\cdot ( \theta_1 + \theta_2 + \theta_6 + \theta_7) \neg X_1 X_2\\ & + (\theta_2 + \theta_4 + \theta_6 + \theta_8)\cdot ( \theta_3 + \theta_4 + \theta_5 + \theta_8) \neg X_1 \neg X_2
\end{align*}

This can be seen as an equivalence between polynomials, so all the coefficients must be equal, meaning that:
\begin{align*}
&\theta_1 + \theta_7 = (\theta_1 + \theta_3 + \theta_5 + \theta_7)\cdot ( \theta_1 + \theta_2 + \theta_6 + \theta_7)\\
&\theta_3 + \theta_5 = (\theta_1 + \theta_3 + \theta_5 + \theta_7)\cdot ( \theta_3 + \theta_4 + \theta_5 + \theta_8) \\
&\theta_2 + \theta_6 = (\theta_2 + \theta_4 + \theta_6 + \theta_8)\cdot ( \theta_1 + \theta_2 + \theta_6 + \theta_7)\\
& \theta_4 + \theta_8 = (\theta_2 + \theta_4 + \theta_6 + \theta_8)\cdot ( \theta_3 + \theta_4 + \theta_5 + \theta_8)
\end{align*}

At this point, since each $\theta_i$ has probabilistic semantics, we perform a sanity check, by replacing them with the corresponding probability they represent and rewrite the whole system in terms of probabilities. This will provide some insights on the underlying constraints, as well as some hints on alternative ways to incorporate the constraints in the model during optimization. 
\begin{align*}
&\theta_1 + \theta_7 = \Pr(X_1,X_2, X_3) + \Pr(X_1,X_2, \neg X_3) = \Pr(X_1,X_2)\\
&\theta_2 + \theta_6 = \Pr(\neg X_1,X_2, X_3) + \Pr(\neg X_1,X_2,\neg X_3) = \Pr(\neg X_1,X_2)\\
& \theta_3 + \theta_5 = \Pr(X_1,\neg X_2, X_3) + \Pr(X_1,\neg X_2, \neg X_3) = \Pr(X_1,\neg X_2)\\
& \theta_4 + \theta_8 = \Pr(\neg X_1,\neg X_2, X_3) + \Pr(\neg X_1,\neg X_2,\neg X_3) = \Pr(\neg X_1,\neg X_2)\\
&\theta_1 + \theta_3 + \theta_5 + \theta_7  = \Pr(X_1,X_2, X_3) + \Pr(X_1,\neg X_2, X_3)\\
& ~~~~~~~~~~~~~~~~~~~~~~~~~~~+ \Pr(X_1,\neg X_2, \neg X_3) + \Pr(X_1,X_2, \neg X_3)\\
& ~~~~~~~~~~~~~~~~~~~~~~~~~~~ = \Pr(X_1)\\
\end{align*}
\begin{align*}
& \theta_1 + \theta_2 + \theta_6 + \theta_7 = \Pr(X_1,X_2, X_3) + \Pr(\neg X_1,X_2, X_3) \\
& ~~~~~~~~~~~~~~~~~~~~~~~~~~~ + \Pr(\neg X_1,X_2,\neg X_3) + \Pr(X_1,X_2, \neg X_3) \\
&~~~~~~~~~~~~~~~~~~~~~~~~~~~ = \Pr(X_2)\\
& \theta_3 + \theta_4 + \theta_5 + \theta_8 = \Pr(X_1,\neg X_2, X_3) + \Pr(\neg X_1,\neg X_2, X_3)\\
& ~~~~~~~~~~~~~~~~~~~~~~~~~~~+ Pr(X_1,\neg X_2, \neg X_3) + Pr(\neg X_1,\neg X_2, \neg X_3)\\
&~~~~~~~~~~~~~~~~~~~~~~~~~~~= \Pr(\neg X_2)\\
& \theta_2 + \theta_4 + \theta_6 + \theta_8 = \Pr(\neg X_1,X_2, X_3) + \Pr(\neg X_1,\neg X_2, X_3) \\
& ~~~~~~~~~~~~~~~~~~~~~~~~~~~+ \Pr(\neg X_1,X_2,\neg X_3) + \Pr(\neg X_1,\neg X_2,\neg X_3)\\
&~~~~~~~~~~~~~~~~~~~~~~~~~~~= \Pr(\neg X_1)
\end{align*}

Substituting all these quantities to the original system, we get the following:
\begin{align*}
&\Pr(X_1,X_2) = \Pr(X_1) \cdot \Pr(X_2)\\
&\Pr(X_1,\neg X_2) = \Pr(X_1) \cdot \Pr(\neg X_2)\\
&\Pr(\neg X_1,X_2) = \Pr(\neg X_1) \cdot \Pr(X_2)\\
&\Pr(\neg X_1,\neg X_2) = \Pr(\neg X_1) \cdot \Pr(\neg X_2)
\end{align*}

These are exactly the conditions that have to hold for two binary variables to be independent. 
Having these equations, the training of the model can go on, interpreting them  as either hard or soft constraints. If they are incorporated as soft constraints, new terms are added in the objective function, but since all of them are differentiable, any standard optimization algorithm could be utilized to train the model. In contrast, if they are treated as hard constraints, projected gradient descend or approaches like the one developed in \cite{MrquezNeila2017ImposingHC} would need to be used to train the SPN. We would also like to note that by utilizing the probabilistic representation of the system, we do not have to do computations between the $\theta_i$'s, explicitly, but we can instead utilize the SPN to get, for example, the quantity $\Pr(X_1,X_2) = \Pr(X_1) \cdot \Pr(X_2)$, and then in order to obtain the gradient, we just use the sub-SPNs corresponding to the distributions of $\{X_1,X_2\},\{X_1\},\{X_2\}$, as well as the rules of differentiation. In the case of the term $\Pr(X_1,X_2) - \Pr(X_1) \cdot \Pr(X_2)$, this would just mean that we would have to compute 
\begin{align*}
\frac{\partial \Pr(X_1,X_2)}{\partial \textbf{w}} - \frac{\partial \Pr(X_1)}{\partial \textbf{w}}\cdot \Pr(X_2) - \Pr(X_1)\cdot \frac{\partial \Pr(X_2)}{\partial \textbf{w}}
\end{align*}
which can be computed by using the corresponding sub-SPNs.

\section{An extension to PSDDs}
\begin{figure}[t]
  \centering
    \includegraphics[scale=0.35]{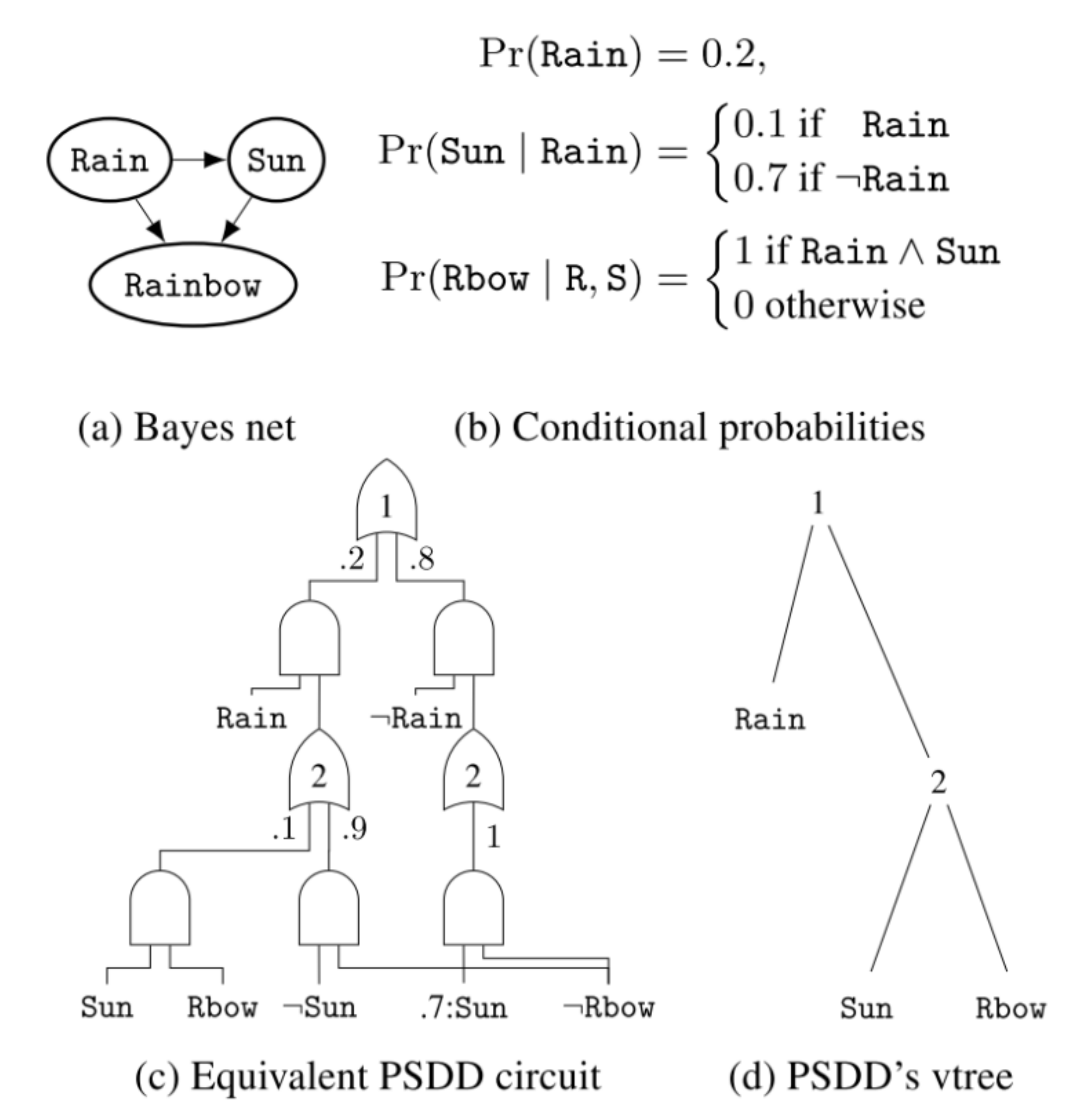}
  \caption{An example of a BN and the corresponding PSDD, taken from \cite{KR148005}}
  \label{psdd}
\end{figure}
So far, we have based our presentation solely on SPNs, since they allow for directly connecting their polynomial representation to probabilistic expressions. In this section we are going to argue that PSDDs allow for training under constraints as well. Briefly, PSDDs are a probabilistic extension of sentential decision diagrams (SDDs) \cite{inproceedings20}. SDDs are used in order to represent a propositional logic theory, while PSDDs utilize this representation and recursively define a probability distribution over it. This procedure results to a rooted directed graph, where terminal nodes can be either a literal, $\top$, or $\bot$, while decision (intermediate) nodes are of the form $(p_1 \wedge s_1)\vee \dots \vee (p_k \wedge s_k)$, where the $p_i's$ are called primes and the $s_i's$ subs. The primes form a partition, meaning they are mutually exclusive and their disjunction is valid. Each prime $p_i$ in a decision node  is assigned a non-negative parameter $\theta_i$ such that $\sum_{i=1}^k \theta_i=1$ and $\theta_i=0$ if and only if $s_i=\bot$. Additionally each terminal node corresponding to $\top$ has a parameter $\theta$ such that $0 < \theta < 1$. Figure \ref{psdd} is an example of a BN, along with the probability distribution defined over it, and a PSDD capturing this distribution, taken from \cite{KR148005}.

PSDDs do not come with a compact polynomial representation, like SPNs, but, still, can be trained under probabilistic constraints. In the previous section we saw how to derive an equivalent way of expressing the resulting system of equations in a probabilistic form. Furthermore, we argued how this form can be utilized in order to train the model, as an alternative to the original formulation. The only requirement is to be able to compute the corresponding probabilities in an efficient way, so it is feasible to infer these quantities at each training iteration, as well as to be differentiable functions w.r.t the parameters, $\textbf{w}$. PSDDs satisfy both requirements \cite{KR148005}, meaning that the same framework, as in section 4, using the probabilistic formulation, can be applied in order to incorporate constraints during training. In addition, considering that PSDDs can also incorporate prior information in the form of propositional expressions, by training them under our proposed framework they can now incorporate both probabilistic and propositional information.

The above demonstrate the ability of any tractable probabilistic model to be trained under probabilistic constraints, as long as it is differentiable, since the computation of marginal or conditional distributions are efficient, by definition. In turn, our approach, although initially based on SPNs, can be seen as model-agnostic, when it is applied to tractable probabilistic models.

\section{Discussion and Conclusions}

In the previous sections we presented an approach allowing to train SPNs under probabilistic constraints. SPNs are tractable models, meaning that probabilistic inference is efficient, since marginal or conditional queries can be computed in time linear in its size. This is an appealing property, because otherwise additional steps, such as MCMC sampling, would be necessary in order to perform inference. Taking that into account, SPNs can not only take probabilistic assumptions into account, but they can also easily compute such queries. 
%Whether tractable models, in general, exhibit similar behaviour is a question requiring further research \cite{bach2002thin}.

In our opinion, an other interesting point is that our work could be seen as related to the work that has been done in the field of \textit{Fairness in AI}, but from a generative modelling point of view. The main objective in the field is to formalize criteria leading to fair predictions, and train models satisfying these criteria. For example, enforcing a condition such as $\Pr(\hat{y}=1|a=0) = \Pr(\hat{y}=1|a=1)$, where $a$ is a protected binary attribute and $\hat{y}$ is the model's prediction, has been proposed \cite{Zemel:2013:LFR:3042817.3042973}. In our setting there is no predicted variable, so this condition cannot be applied. However, an analogous condition could be utilized when dealing with generative modelling, such as $\Pr(y=1|a=0) = \Pr(y=1|a=1)$.

Another recent approach on incorporating constraints is through the introduction of the semantic loss function \cite{pmlr-v80-xu18h}. In this sense, we consider it related to our work, but the existing framework allows for constraints over the predicted variable, so this is not immediately applicable to generative models. In addition, another significant difference between this and our approach is that the semantic loss function can only express rules in propositional logic, where our method can make use of probabilistic rules.

In this work we provided a way to equip SPNs with background information. This adds to the growing literature on constraints and machine learning that is emerging recently. The key difference in our method is that it is made for generative models, unlike the majority of the existing work, as well as it exhibits how the model's intrinsic architecture can be utilized to do so, allowing us to recover a system of equations. We hope the results of this paper will lead to a new range of applications making use of tractable generative models that allow the incorporation of non-trivial logical and probabilistic prior knowledge. 

\bibliographystyle{ecai}
\bibliography{tractablebib}

\begin{thebibliography}{10}

\bibitem{article50}
Fahiem Bacchus, Shannon Dalmao, and Toniann Pitassi, `Solving sat and bayesian
  inference with backtracking search', {\em J. Artif. Intell. Res. (JAIR)},
  {\bf 34},  391--442, (01 2009).

\bibitem{bach2002thin}
Francis~R Bach and Michael~I Jordan, `Thin junction trees', in {\em Advances in
  Neural Information Processing Systems}, pp. 569--576, (2002).

\bibitem{choi2017relaxing}
Arthur Choi and Adnan Darwiche, `On relaxing determinism in arithmetic
  circuits', {\em arXiv preprint arXiv:1708.06846}, (2017).

\bibitem{Choi:2015:TLS:2832581.2832649}
Arthur Choi, Guy Van Den~Broeck, and Adnan Darwiche, `Tractable learning for
  structured probability spaces: A case study in learning preference
  distributions', in {\em Proceedings of the 24th International Conference on
  Artificial Intelligence}, IJCAI'15, pp. 2861--2868. AAAI Press, (2015).

\bibitem{Darwiche2000ADA}
Adnan Darwiche, `A differential approach to inference in bayesian networks',
  {\em J. ACM}, {\bf 50},  280--305, (2000).

\bibitem{darwiche2002logical}
Adnan Darwiche, `A logical approach to factoring belief networks', in {\em
  Proceedings of the 8th International Conference on Principles of Knowledge
  Representation and Reasoning}, pp. 409--420, (2002).

\bibitem{Darwiche:2003:DAI:765568.765570}
Adnan Darwiche, `A differential approach to inference in bayesian networks',
  {\em J. ACM}, {\bf 50}(3),  280--305, (May 2003).

\bibitem{inproceedings20}
Adnan Darwiche, `Sdd: A new canonical representation of propositional knowledge
  bases.', pp. 819--826, (01 2011).

\bibitem{Flet87}
Roger Fletcher, {\em Practical Methods of Optimization}, John Wiley \& Sons,
  New York, NY, USA, second edn., 1987.

\bibitem{inproceedings10}
Tal Friedman and Guy Van~den Broeck, `On constrained open-world probabilistic
  databases', in {\em Proceedings of the Twenty-Eighth International Joint
  Conference on Artificial Intelligence}, pp. 5722--5729. International Joint
  Conferences on Artificial Intelligence Organization, (08 2019).

\bibitem{SPN_structure_learning}
R.~Gens and P.~Domingos, `Learning the structure of sum-product networks', in
  {\em International Conference on Machine Learning}, (2013).

\bibitem{Gill81}
Philip~E. Gill, Walter Murray, and Margaret~H. Wright, {\em Practical
  optimization}, Academic Press Inc. [Harcourt Brace Jovanovich Publishers],
  London, 1981.

\bibitem{GrgiHlaa2016TheCF}
Nina Grgic-Hlaca, Muhammad~Bilal Zafar, Krishna~P. Gummadi, and Adrian Weller,
  `The case for process fairness in learning: Feature selection for fair
  decision making', (2016).

\bibitem{Hardt:2016:EOS:3157382.3157469}
Moritz Hardt, Eric Price, and Nathan Srebro, `Equality of opportunity in
  supervised learning', in {\em Proceedings of the 30th International
  Conference on Neural Information Processing Systems}, NIPS'16, pp.
  3323--3331, USA, (2016). Curran Associates Inc.

\bibitem{Adam}
Diederik~P. Kingma and Jimmy Ba.
\newblock Adam: A method for stochastic optimization, 2014.
\newblock cite arxiv:1412.6980Comment: Published as a conference paper at the
  3rd International Conference for Learning Representations, San Diego, 2015.

\bibitem{KR148005}
Doga Kisa, Guy~Van den Broeck, Arthur Choi, and Adnan Darwiche.
\newblock Probabilistic sentential decision diagrams, 2014.

\bibitem{Leonard1999}
David~K. Leonard and Jiming Jiang, `Gender bias and the college predictions of
  the sats: A cry of despair', {\em Research in Higher Education}, {\bf 40}(4),
   375--407, (Aug 1999).

\bibitem{liang2017learning}
Yitao Liang, Jessa Bekker, and Guy Van~den Broeck, `Learning the structure of
  probabilistic sentential decision diagrams', in {\em Proceedings of the 33rd
  Conference on Uncertainty in Artificial Intelligence (UAI)}, (2017).

\bibitem{Liang2018LearningLC}
Yitao Liang and Guy~Van den Broeck, `Learning logistic circuits', {\em ArXiv},
  {\bf abs/1902.10798}, (2018).

\bibitem{loan}
John~F Mahoney and James~M. Mohen, `Method and system for loan origination and
  underwriting', {\em US Patent 7,287,008. 1}, (October 23 2007).

\bibitem{MrquezNeila2017ImposingHC}
Pablo M{\'a}rquez-Neila, Mathieu Salzmann, and Pascal Fua, `Imposing hard
  constraints on deep networks: Promises and limitations', {\em ArXiv}, {\bf
  abs/1706.02025}, (2017).

\bibitem{inproceedings18}
Alejandro Molina, Antonio Vergari, Nicola Di~Mauro, AU~, Floriana Esposito, and
  Kristian Kersting, `Mixed sum-product networks: A deep architecture for
  hybrid domains', (01 2018).

\bibitem{Pearl2009CausalII}
Judea Pearl, `Causal inference in statistics: An overview', (2009).

\bibitem{Pearl:2009:CMR:1642718}
Judea Pearl, {\em Causality: Models, Reasoning and Inference}, Cambridge
  University Press, New York, NY, USA, 2nd edn., 2009.

\bibitem{PetBuhMei15}
J.~Peters, P.~B{\"u}hlmann, and N.~Meinshausen, `Causal inference using
  invariant prediction: identification and confidence intervals', {\em Journal
  of the Royal Statistical Society, Series B (Statistical Methodology)}, {\bf
  78}(5),  947--1012, (2016).
\newblock (with discussion).

\bibitem{6130310}
H.~{Poon} and P.~{Domingos}, `Sum-product networks: A new deep architecture',
  in {\em 2011 IEEE International Conference on Computer Vision Workshops (ICCV
  Workshops)}, pp. 689--690, (Nov 2011).

\bibitem{lagrange}
Morrey Charles B.~Jr. Protter, Murray~H., {\em Intermediate Calculus},
  Springer, 1985.

\bibitem{Raedt:2010:CPD:2898607.2898874}
Luc~De Raedt, Tias Guns, and Siegfried Nijssen, `Constraint programming for
  data mining and machine learning', in {\em Proceedings of the Twenty-Fourth
  AAAI Conference on Artificial Intelligence}, AAAI'10, pp. 1671--1675. AAAI
  Press, (2010).

\bibitem{Shen2018ConditionalPM}
Yujia Shen, Arthur Choi, and Adnan Darwiche, `Conditional psdds: Modeling and
  learning with modular knowledge', in {\em AAAI}, (2018).

\bibitem{Silverstein2000StandardizedTT}
Andrea~L. Silverstein, `Standardized tests: The continuation of gender bias in
  higher education', (2000).

\bibitem{Stewart:2017:LSN:3298483.3298610}
Russell Stewart and Stefano Ermon, `Label-free supervision of neural networks
  with physics and domain knowledge', in {\em Proceedings of the Thirty-First
  AAAI Conference on Artificial Intelligence}, AAAI'17, pp. 2576--2582. AAAI
  Press, (2017).

\bibitem{pmlr-v80-xu18h}
Jingyi Xu, Zilu Zhang, Tal Friedman, Yitao Liang, and Guy Van~den Broeck, `A
  semantic loss function for deep learning with symbolic knowledge', in {\em
  Proceedings of the 35th International Conference on Machine Learning}, eds.,
  Jennifer Dy and Andreas Krause, volume~80 of {\em Proceedings of Machine
  Learning Research}, pp. 5502--5511, Stockholmsmässan, Stockholm Sweden,
  (10--15 Jul 2018). PMLR.

\bibitem{article3}
Muhammad Zafar, Isabel Valera, Manuel Gomez~Rodriguez, and Krishna~P. Gummadi,
  `Fairness constraints: A mechanism for fair classification', (07 2015).

\bibitem{Zemel:2013:LFR:3042817.3042973}
Richard Zemel, Yu~Wu, Kevin Swersky, Toniann Pitassi, and Cynthia Dwork,
  `Learning fair representations', in {\em Proceedings of the 30th
  International Conference on International Conference on Machine Learning -
  Volume 28}, ICML'13, pp. III--325--III--333. JMLR.org, (2013).

\bibitem{Zhang:2011:KCI:3020548.3020641}
Kun Zhang, Jonas Peters, Dominik Janzing, and Bernhard Sch\"{o}lkopf,
  `Kernel-based conditional independence test and application in causal
  discovery', in {\em Proceedings of the Twenty-Seventh Conference on
  Uncertainty in Artificial Intelligence}, UAI'11, pp. 804--813, Arlington,
  Virginia, United States, (2011). AUAI Press.

\bibitem{Zheng:2018:DNT:3327546.3327618}
Xun Zheng, Bryon Aragam, Pradeep Ravikumar, and Eric~P. Xing, `Dags with no
  tears: Continuous optimization for structure learning', in {\em Proceedings
  of the 32Nd International Conference on Neural Information Processing
  Systems}, NIPS'18, pp. 9492--9503, USA, (2018). Curran Associates Inc.

\end{thebibliography}

\end{document}